\newtheorem{thm}{Theorem}
\algnewcommand\algorithmicinput{\textbf{Input:}}
\algnewcommand\algorithmicparams{\textbf{Parameters:}}
\algnewcommand\algorithmicoutput{\textbf{Output:}}
\algnewcommand\Input{\item[\algorithmicinput]}%
\algnewcommand\Parameters{\item[\algorithmicparams]}%
\algnewcommand\Output{\item[\algorithmicoutput]}%
\DeclareMathOperator*{\argmax}{arg\,max}
\title{Federated Learning with Noisy User Feedback}
\author{
  Rahul Sharma\thanks{\textit{ equal contributions}} \\
  \texttt{rahul.sharma@usc.edu} \\\And
  Anil Ramakrishna\footnotemark[1]  \\
  \texttt{aniramak@amazon.com} \\\And 
  Ansel MacLaughlin \\
  \texttt{ammaclau@amazon.com} \\\AND
  Anna Rumshisky \\
  \texttt{arum@cs.uml.edu} \\\And
  Jimit Majmudar\\
  \texttt{mjimit@amazon.com} \\\And 
  Clement Chung \\
  \texttt{chungcle@amazon.com} \\\AND
  Salman Avestimehr \\
  \texttt{avestime@usc.edu} \\\And 
  Rahul Gupta \\
  \texttt{gupra@amazon.com}
}
\begin{document}

\maketitle

%

\begin{abstract}
Machine Learning (ML) systems are getting increasingly popular, and drive more and more applications and services in our daily life. This has led to growing concerns over user privacy, since human interaction data typically needs to be transmitted to the cloud in order to train and improve such systems. Federated learning (FL) has recently emerged as a method for training ML models on edge devices using sensitive user data and is seen as a way to mitigate concerns over data privacy. However, since ML models are most commonly trained with label supervision, we need a way to extract labels on edge to make FL viable. In this work, we propose a strategy for training FL models using positive and negative user feedback.
We also design a novel framework to study different noise patterns in user feedback, and explore how well standard noise-robust objectives can help mitigate this noise when training models in a federated setting. 
We evaluate our proposed training setup through detailed experiments on two text classification datasets and analyze the effects of varying levels of user reliability and feedback noise on model performance. We show that our method improves substantially over a self-training baseline, achieving performance closer to models trained with full supervision.
\end{abstract}

\section{Introduction}
\label{sec:intro}
Artificial Intelligence (AI) and Machine Learning (ML) have become increasingly common in modern society with applications ranging from simple standalone products to complex modules \citet{kaplan2019siri}. However, this rise has also created growing privacy concerns \citet{papernot2016towards,yeom2018privacy}.
Such concerns may affect user willingness the adapt new technologies \citet{guhr2020privacy}. In response, many government agencies across the world have introduced regulations to protect the data-handling rights of their citizens, such as the European Union's GDPR \citet{gdpr} and California's CCPA \citet{ccpa}. 

Federated Learning (FL) is a step in this direction to improve consumer trust, where models are trained without moving data out of client devices. 
The typical FL approach is to iteratively train local models on edge devices and then propagate them back to a central node in order to update the global model. Most commonly, this is done using Federated Averaging (FedAvg)~\citet{mcmahan2017communication}, where we take a simple average over the client model parameters.
However, in order to update local models on the edge, this setup assumes the presence of labeled user data on each device, which is often not possible. Most prior works do not address this  problem, but simulate fully-supervised federated learning by distributing existing labeled datasets across edge devices. In this work, we consider a more realistic scenario, where labels are not available on device. Rather than turning to unsupervised learning as seen in \citet{hard2018federated}, we instead propose a novel setup to leverage user feedback in order to train the FL model. 

In many real world AI applications with direct or indirect human interaction, it is possible to collect either explicit user feedback (e.g., using a voice or a screen-based prompt) or implicit feedback in the form of behavioral cues. For an example of implicit feedback, consider a user interacting with a virtual AI assistant (such as Alexa), who asks to play the song ‘Bohemian Rhapsody' from the band Queen. The virtual assistant would interpret the prompt and select a song from its library to play. If the user lets the music play without interruption, this can be viewed as positive feedback, suggesting that the underlying model interpreted the request correctly. On the other hand, if the user interrupts the music and makes a repeat (or different) request, this can be viewed as negative feedback, suggesting that the underlying model prediction was incorrect. In this work, we propose to leverage such feedback in federated model training.

\paragraph*{Leveraging Positive and Negative Feedback} In our proposed setup, we first train a seed model on a small amount of labeled data on a central node. This mimics the real-world scenario where a small amount of data can be collected and annotated to bootstrap an initial model. We then propagate this weak seed model to each of the clients. On the edge, we use this seed model to make predictions for each user's request. Since the model is trained with limited data, these predictions may be incorrect. To further improve this model performance, we leverage user feedback as an indirect indicator of the predicted label's quality. Since positive user feedback likely indicates that a model prediction is correct, we label examples with positive feedback with the seed model's prediction and add them to the training data. This mirrors the standard self-training setup \citet{rosenberg2005semi}, where weak models are further trained on a set of their own predictions. When a user gives negative feedback, however, we cannot assign a label to the example, since we only know that the seed model’s prediction is wrong. We instead treat these prediction as \textit{complementary labels} \citet{ishida2017learning, yu2018learning} and extend federated model training to use such labels. 

\paragraph*{Modeling Feedback Noise} In realistic human interactions, however, the user may not always provide consistent feedback, making user feedback signal noisy. In the virtual AI assistant example above, if the model predicts a different song from the same band, the user may choose to continue listening without interruption. Similarly, even if the model predicts the correct song, the user may change their mind once the song starts playing and interrupt with a new request. Such user behavior will introduce noise into the feedback signal. 
In order to assess typical levels of such noise in user feedback, we conduct a pilot study on Amazon Mechanical Turk (Mturk), and evaluate the accuracy of feedback from Mturk users on two different text classification problems. Based on this study, we define a model of user noise defined by two parameters that specify how likely they are to give accurate feedback on both  correct and incorrect predictions by the seed model. With this model of user behaviour, we then study the effects of user noise on model performance. We conduct extensive experiments on two text classification datasets, training FL models on feedback data with varying amounts of user noise simulated using our model. We further experiment with various noise-robustness strategies to mitigate the effect of such noisy labels and present promising results. 

The key contributions in this paper are as follows: 
\begin{enumerate}
    \item We propose a new framework for modeling and leveraging user feedback in FL and present a strategy to train supervised FL models directly on positive and negative user feedback. We show that, under mild to moderate noise conditions, incorporating feedback improves model performance over self-supervised baselines.
    \item We propose a novel model of user feedback noise and study the effects of varying levels of this noise on model performance. 
    \item We study the effectiveness of existing noise-robustness techniques to mitigate the effects of user-feedback noise and identify promising directions for future exploration.
\end{enumerate}

\section{Related Work}
\label{sec:related work}
\subsection{Federated Learning}
Federated Learning \citet{mcmahan2017communication} has recently seen a rise in popularity in a number of domains, including natural language processing (NLP) ~\citet{yang2018applied, ramaswamy2019federated, hard2018federated}. This is due to growing privacy concerns \citet{papernot2016towards, geyer2017differentially, truex2019hybrid}, abundance of unlabeled data, and an increase in the computational capacity on edge devices. However, availability of labels on edge (or rather, lack thereof ) limits the practical application of FL in most real-world use cases. In this work, we present an extension to FL and show improvements in federated model performance by leveraging user feedback. 
Recent works have also revealed risks of information leakage from gradients in federated learning, and several techniques have been developed to mitigate this risk (see \citet{zhu2019deep}, \citet{lyu2020threats} and the references there in).

\subsection{Learning With User Feedback}
User feedback on model behavior provides learning signals which can be leveraged to continuously improve model performance. Using feedback signals for model training provides robustness to concept and data drifts, as new data is always accompanied with new feedback labels from which to learn. Learning methods that leverage user feedback have been applied to a variety of tasks in NLP, such as semantic parsing \citet{iyer2017learning}, machine translation \citet{kreutzer2018can} and question answering \citet{kratzwald2019learning}. To our knowledge, however, our work is the first to build a parametric model of user feedback noise and to study how to train federated learning algorithms with noisy user feedback. 

\subsection{Negative Label Learning}
Standard supervised learning operates on data labeled with their true classes. Feedback data from users, however, can also be negative, indicating that the predicted class is wrong. Since the correct class of examples with such negative-feedback is unknown, our proposed method must be able to handle such ambiguity during training. In our work, we label examples with negative feedback with a complementary label corresponding to the predicted class \citep{ishida2017learning}. Complementary labels simply specify the category that a given example \textit{does not} belong to. In our work, we follow the setup of \citet{yu2018learning}, who propose loss functions to train neural models on biased complementary labels. 

\subsection{Noise-Robust Learning}
When training models on labels derived from noisy user feedback signals, it is important to use a strategy to mitigate the effects of label noise on model performance. One straightforward approach is to use a noise-robust loss function, such as Reverse Cross Entropy \citet{wang2019symmetric} or Mean Absolute Error~\citet{Ghosh2017RobustLF}.
In our work we follow the noise-robust learning setup of \citet{ma2020normalized}, who present a training strategy that combines two robust loss functions (one active, one passive) to better handle label noise. 



\section{Modeling User Feedback}

We propose a general framework for leveraging user feedback in federated learning. We use text classification as an exemplar task to evaluate this framework, but the proposed setup can be easily applied to other tasks. We use two benchmark text classification datasets: the Stanford Sentiment Treebank dataset (\emph{sst2}) and the 20 newsgroups dataset (\emph{20news}). The \emph{sst2} dataset comprises of $11,855$ phrases from movie reviews and the corresponding binary sentiment labels. The 20 newsgroup dataset \emph{20news} consists of 
$18k$ news articles and headers, organized into 20 classes.

\subsection{Pilot Study: Real World User Feedback}
\label{sec:case-study}
To understand the dynamics of user feedback noise, we conduct a pilot study using Amazon Mechanical Turk (Mturk). We use text classification on the above two datasets, \emph{sst\_2} and \emph{20news}, as the task of interest. For each dataset, we train a seed model on $1\%$ of the training data, then run inference with this model to generate pseudo-labels on the remaining $99\%$ of the training examples. 
We sample 2000 pseudo-labeled examples from this set, split them into disjoint groups of 40 examples each, and show them to $50$ and $40$ different MTurk workers for \emph{sst\_2} and \emph{20news}, respectively. For each example, the worker is shown the text of the example and the corresponding predicted pseudo label.
The workers are then asked to specify whether the pseudo label is accurate (positive feedback) or not (negative feedback) along with an option to mark `I Don't Know' in case they find it difficult to decide. Further details about the specific instructions used in our Mturk study can be found in Appendix \ref{appendix:mturk_study}.
We use the ground truth gold labels in \emph{sst\_2} and \emph{20news} to evaluate the quality of user feedback by computing the proportions of times users gave positive feedback to correct pseudo labels and negative feedback to incorrect ones. The observed average error in feedback is $33.9\%$ for 20news and $27.13\%$ for sst2. The higher observed error for \emph{20news} is likely due to the fact that 20news is a 20-way topic classification problem with overlapping categories such as `autos' and `motorcycles'. We further analyze the collected data using the noisy feedback model described next. Full data will be released with the final draft of the paper. 

\subsection{Feedback Noise Modeling}
\label{subsec:user_feedback}
Motivated by the observed noisy user behavior above, we propose a parametric noise model using two \textit{user-specific} Bernoulli random variables parameterized by $\gamma$ and $\delta$, as shown below.
\begin{gather*}
    P(\text{positive feedback} | \text{correct prediction}) = \gamma \\
    P(\text{negative feedback} | \text{incorrect prediction}) = \delta
\end{gather*}
$\gamma$ and $\delta$ capture the probability of the user accurately providing positive and negative feedback, respectively. This scheme provides a powerful tool to model user noises of various types - by varying the values of these two parameters, we can simulate various user feedback behaviors. For instance, highly reliable users can be simulated by choosing $\gamma \sim 1$ and $\delta \sim 1$ while adversarial users can be simulated by choosing $\gamma \sim 0$ and $\delta \sim 0$. Similarly, users that provide consistently positive feedback can be simulated by selecting $\gamma \sim 1$ and $\delta \sim 0$, and vice versa.

\begin{figure}[tbh]
    \centering
    \includegraphics[width=0.45\textwidth,keepaspectratio]{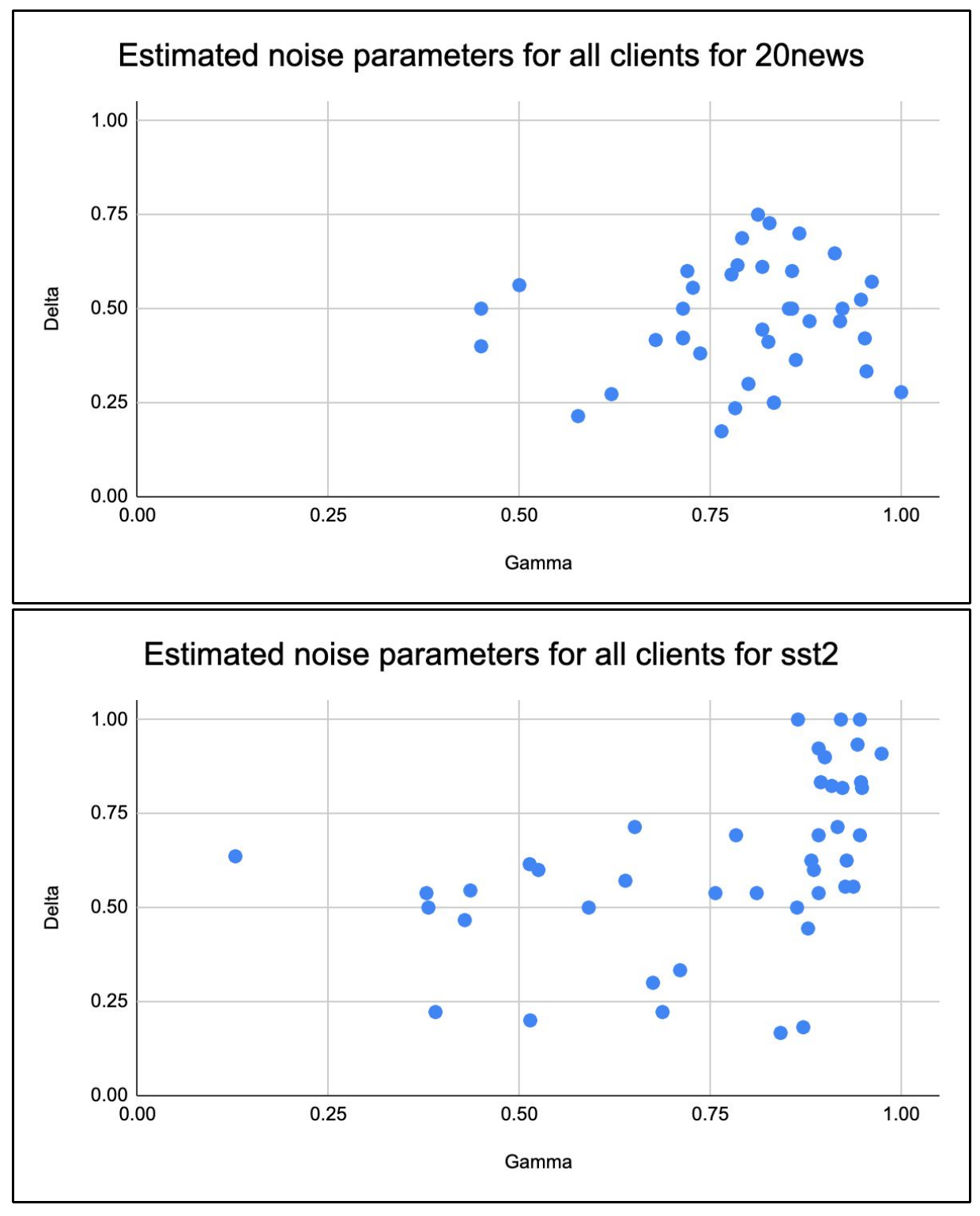}
    \caption{Distribution of noise parameters $\gamma$ and $\delta$ for annotators on Mturk for \emph{20news} and \emph{sst2} dataset.}
    \label{fig:mturk_client_wise}
\end{figure}

For each worker in our MTurk study, we estimate the noise parameters $\gamma$ and $\delta$ using the MLE estimators described in Appendix \ref{appendix:estimators}. We show the distributions over the estimated noise parameters for each worker in Figure~\ref{fig:mturk_client_wise}, which highlights several characteristics of the user noise. We find that parameters vary across users and across datasets. Many workers have high values for both $\gamma$ and $\delta$ (upper right quadrant in the plot), especially for the \emph{sst2} dataset. In such cases, user noise is relatively low. Some workers have $\gamma \sim 1$ and $\delta \sim 0$, suggesting that they provide positive feedback with very high probability, regardless of the correctness of the pseudo label. Similarly, we also observe some points with higher $\delta$ but $\gamma$ close to $0$, suggesting that these workers provide negative feedback with high probability. Since we only recruited reliable worker from Mturk ($95\%+$ approval rate and $5000+$ approved HITs, see Appendix \ref{appendix:mturk_study}), we do not see any workers in the adversarial or extremely-high noise scenarios (lower-left quadrant in the plot). Finally, we also observe that the workers in the \emph{sst2} dataset are more concentrated towards the right top corner while, for the \emph{20news} dataset, they are relatively spread out. This can be attributed to the inherent difficulty of the two datasets -- \emph{sst2} is an easier 2-class sentiment classification dataset, while \emph{20news} is a more difficult news-classification dataset with 20 (sometimes overlapping) classes. 

\section{Approach}
\subsection{Federated Self-Training}
\label{subsec:federatedSelfTraining}
Given a training dataset $D_{\text{t}} = \{x_i, y_i\}$, we divide it into 3 parts: a training split $D_{\text{s}} \subset D_{\text{t}}: |D_{\text{s}}| = k|D_{\text{t}}|, k\ll 1$, used to train the seed model; a validation split $D_{\text{v}} \subset D_{\text{t}}: |D_{\text{v}}| = v|D_{\text{t}}|, v < 1$ and an unlabeled split $D_{u} = D_{\text{t}} - (D_{\text{s}} \bigcup D_{\text{v}})$.   
We assume that the examples in $D_{\text{s}}$ and $D_{\text{v}}$ have gold labels available for training, which mimics the real-world situation where a small amount of data can be annotated to bootstrap the model training. We treat $D_{\text{u}}$ as the unlabeled dataset which is available on edge. We initialize the seed model $f_{\text{s}}(x)$ by training on $D_{s}$ using standard cross entropy loss.
After convergence, this model, $f_{\text{s}}(x)$, is deployed to the edge devices to start federated training. In order to simulate a real-world federated learning setup, we distribute the examples from $D_{\text{u}}$ among $N$ edge clients following a skewed distribution, described in detail in \S\ref{sec:Exp}. The dataset on each client $n$ is labeled $D_{u}^{n}$ where $n \in [1,N]$. The seed model on device $j$ then makes predictions on its corresponding client-specific dataset $D_{u}^{j}$. 
Since the edge model does not have access to gold labels for training, there are only two potential sources of information it can learn from. First, its own predictions, $\rho_{i}$, which we call pseudo labels:
\begin{align}
    \rho_{i} = \argmax(f_{\text{s}}(x_i)): i \in D_{u}^{n}
\end{align}
Labeling an example $x_i \in D_{u}^{n}$ with $\rho_{i}$ is typically referred to as self-training, a commonly-used semi-supervised training setup. However, in our setup, there is also a second source of information: user feedback to each $\rho_{i}$. We assume that users give binary (positive or negative) feedback to each $\rho_{i}$. We can thus use this feedback to validate or reject each $\rho_{i}$, generating label $\rho_{i}$ when the feedback is positive and $\bar{\rho_{i}}$ when the feedback is negative. Then, with these new user-feedback-labeled datasets on each edge device, we can follow the standard FL training, further training a copy of the initial global model on each edge device, then propagating each local model back to the global server for aggregation. Our overall setup used for federated learning with user feedback is shown in Figure \ref{fig:overall_arch}. 

%

\begin{figure}[tbh]
\includegraphics[width=0.45\textwidth,keepaspectratio]{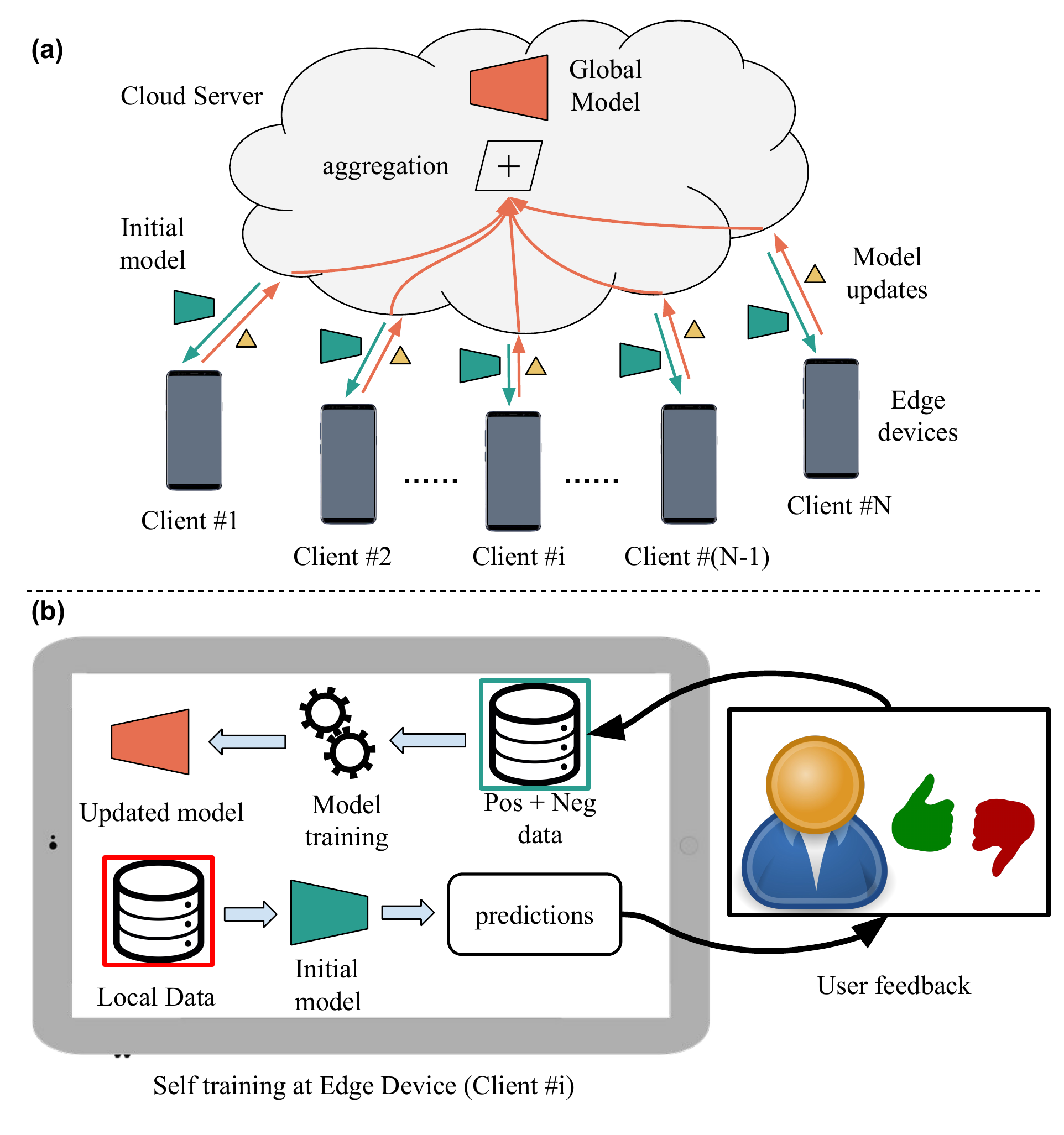}
    \caption{Overview of our federated learning setup with user feedback. a) Federated learning with a central cloud server and several client devices. b) Local training at a particular client with user feedback to improve pseudo labels.}
    \label{fig:overall_arch}
\end{figure}

\subsection{User Feedback Simulation}
\label{subsec:feedback_sim}

As discussed in \S\ref{sec:intro}, in the real world, users provide feedback on predictions made by  deployed models. However, large-scale collection of user feedback in a deployed FL application is an expensive endeavor with no publicly-available datasets. In this work, we instead devise a novel framework to simulate realistic noisy user feedback on publicly-available, human-annotated data, and defer the task of real world deployment to future work. Specifically, when we distribute the unlabeled dataset $D_{u}$ across the $N$ client devices, we also send along the true gold label for each example $x_i$. For each $x_i \in D_{u}^{n}$, we then simulate feedback by comparing the model prediction $\rho_{i}$ to its underlying gold label. These gold labels are only used to simulate user feedback -- they are not used for edge model training. Specifically, we create two new pseudo-labeled datasets corresponding to positive ($D_{pos}^{n}$) and negative feedback ($D_{neg}^{n}$) from each device's dataset $D_{u}^{n}$. For a given sample $x_i \in D_{u}^{n}$, we assign it to the positive feedback set $D_{pos}^{n}$ with probability $\gamma$ if the corresponding pseudo label $\rho_{i}$ is correct and $1-\delta$ if $\rho_{i}$ is incorrect. Similarly, we assign a sample to the negative feedback set $D_{neg}^{n}$ with probability $1-\gamma$, if $\rho_{i}$ is correct and $\delta$ if $\rho_{i}$ is incorrect. A pseudo-code detailing our strategy to simulate user feedback is shown in Algorithm ~\ref{alg:userfeedback}.

\subsection{Federated Learning with Feedback}
\label{subsec:nll}
For examples with positive user feedback, since we have user confirmation that the pseudo-label $\rho_{i}$ is correct, we directly incorporate $\rho_{i}$ into model training as if they were ground-truth. We use the standard categorical cross entropy (CCE) loss function similar to the seed model:
\begin{align}
\label{eqn:loss_poss}
    loss_{\text{pos}} = -\sum_{i \in D_{pos}^{n} }\rho_{i}log(f(x_i)) 
\end{align}
where $f(x_i)$ represents the posterior probability distribution for sample $i$ and $\rho_i$ is overloaded to depict the one-hot representation  of the pseudo label for sample $i$.
On the other hand, negative feedback signifies that the sample does not belong to the class $\rho_{i}$. Although the user feedback does not indicate which class these samples ought to be, we do acquire information for what the model \textit{should not do}. Thus we can assume that these are complementary labels, denoted as $\bar{y_{i}} = \rho_{i}$. 
To incorporate these in our model training, we adapt the complementary learning methods introduced by \citet{yu2018learning}, in which the authors model the complementary posterior probability distribution, $P(\bar{Y} = d | X)$ as a function of true class posterior distribution, $P(Y=c|X)$ and the transition probability matrix $\mathbf{Q}$, where $q_{cd}$ is an entry in the matrix $\mathbf{Q}$ with $c$ and $d$ are class labels, following:

\begin{align}
    q_{cd} &=
    \begin{cases}
    P(\bar{Y} = d | Y= c)  &c \neq d\\
    0  &c = d
    \end{cases} \label{eq:complementary_prob} \\
    P(\bar{Y} = d | X) &= \sum_{c \neq d}P(\bar{Y} = d | Y= c) \nonumber \\  &P(Y=c|X)
\end{align}

We estimate the transition probability matrix $\mathbf{Q}$ using the validation set $D_{v}$ and the initial seed model $f_{\text{s}}(x)$. To compute $\mathbf{Q}_{c:}$, the transition probability distribution for the class $c$, we average the posterior probability of those samples with gold label $c$, but are incorrectly predicted by the model. Given this, we set $q_{cc} = 0$ and renormalize the vector as shown in Equation~\ref{eq:Q_matrix}.
\begin{align}
    \label{eq:Q_matrix}
    \mathbf{Q}_{c:} &= \frac{1}{K}\sum_{k \in D_{c}}\frac{f_{s}(x_{k})}{1 - f_{s}(x_k)_i}: K = |D_{c}| \\
    D_{c} &\subset D_{v}: argmax(f(x_k)) \neq c; y_k = c \nonumber
\end{align}
Using the estimated transition matrix, and the posterior distribution for the true class, we estimate the posterior distribution for the complementary class, following Equation~\ref{eq:complementary_prob}. We then use the pseudo labels as complementary labels and train the model with the objective function:
\begin{align}
\label{eqn:loss_neg}
loss_{\text{neg}} = -\sum_{i \in D_{neg}^{n}}\rho_ilog(\mathbf{Q}.f(x_i)) 
\end{align}
\noindent here, we overload $\rho_i$ to depict the one-hot representation of the pseudo label for sample $i$.

Our overall model is trained to jointly optimize the loss functions from positive feedback and negative feedback. Inspired by \citet{kim2020pseudo} where the contribution of negative learning is slowly increased during training, we use a scheduler to weigh the two loss functions, ensuring that the positive label learning component has higher weight in the early epochs, gradually increasing the weight for negative label learning as training proceeds. Specifically, at each client we optimize the following objective:
\begin{align}
    loss_{reg} &= (1 - \alpha)*loss_{pos} + \alpha *loss_{neg} \label{eq:apl-loss_3}
\end{align}
where, $\alpha = 1 - p^{t}$, $t$ denotes the current epoch in the training process and $p \in (0,1)$, which was selected using a held out validation set. 

\subsection{Noise-Robust Loss Functions}
\label{subsec:noise-robust}
Though user feedback provides a valuable learning signal to train our models on edge, it can be noisy. As noted in \S\ref{subsec:user_feedback}, we expect two behaviors from noisy users: if the user provides incorrect positive feedback, we have incorrect true labels in $D_{pos}^{n}$. Similarly, if the user provides incorrect negative feedback, we have samples in the $D_{neg}^{n}$ with incorrect complementary labels. Since we use cross entropy loss functions for training on both positive and negatively labeled data points, our model is prone to overfitting to noisy samples \citet{zhang2018generalized} since they would have lower posteriors (forcing the algorithm to put more emphasis on them during training). This necessitates some form of noise mitigation in our model training. 

To mitigate label noise, we use the approach introduced by \citet{ma2020normalized}, where they propose to add noise robustness to any given loss function by normalizing it across all labels. \citet{ma2020normalized} further improve convergence by presenting a combination loss function with active and passive loss components, to maximize the posterior for the true class and to minimize the posterior for complementary classes, respectively. In our experiments, we use a combination of Normalized Cross Entropy (NCE) \citet{ma2020normalized} and Reverse Cross Entropy (RCE) \citet{wang2019symmetric} as the active and passive components, weighted in a ratio 1:2 selected using our validation set.
\begin{equation}
    \label{eq:noise_robust_loss}
    loss_{robust} = NCE + 2*RCE
\end{equation}

The NCE and RCE functions are listed below. 
\begin{align}
    \label{eq:apl-loss}
    NCE &= \frac{-\sum_{k=1}^{K}q(k|x)\log p(k|x)}{-\sum_{j=1}^{K}\sum_{k=1}^{K}q(y=j|x)\log p(k|x)}\\
    RCE &= -\sum_{k=1}^{K}p(k|x)\log q(k|x))
\end{align}
where $K$ is number of label classes, $q(k|x)$ denotes the gold label distribution and $p(k|x)$ denotes the posterior probability distribution.
\section{Experiments}
\label{sec:Exp}

\subsection{Implementation Details}
We use the publicly-provided train and test splits for the \emph{sst2} and \emph{20news} datasets and further derive a validation split consisting of 20\% ($v=0.2$) of the train split ($D_t$), with uniform class distribution. We use another 1\% ($k=0.01$) of $D_t$ as seed model set, $D_s$. We choose a small value for $k$ to mimic a real world scenario where a larger volume of data may be un-annotated in an FL setup.
The remaining unlabeled dataset $D_u$ (79\% of the ($D_t$) is further divided among 15 mutually exclusive subsets ($D_{u}^{n}, n \in [1, 15]$), which simulates the data for 15 edge clients. While creating the clients' partitions we ensure that all clients have data with a uniform class distribution which enables us to focus on our model performance in an idealized case. 
We use the DistilBERT~\citet{sanh2019distilbert} base model as the classifier for our tasks and follow the standard fine-tuning setup for text classification. To implement the federated learning pipeline we use the publicly-available FedNLP~\citet{fednlp2021} benchmark and apply the FedAvg~\citet{mcmahan2017communication} algorithm to aggregate the client model updates at the server end. We train the model on an 8-GPU (Nvidia V100s) machine for up to 50 rounds with early stopping enabled. Within each round, we use a batch size of 8 to train the client models for 5 epochs each. 

\begin{table}[tb]
    \centering
    \setlength\tabcolsep{2.5pt}
    \begin{tabular}{@{}c|c|c@{}}
    \textbf{Experimental settings}          & \textbf{20 news} & \textbf{sst2}  \\ \hline
    Initial model ($D_s$)                & 59.14   & 77.37 \\ 
    Self training (no feedback)                          & 60.79   & 77.26 \\ \hline
    Positive feedback (noisy)   & 62.10   & 79.79 \\ 
    All feedback (noisy)        & 65.01   & 85.17 \\ \hline
    Positive feedback (noise robust) &62.33 &79.85 \\ 
    All feedback (noise robust)            &65.13 &85.39 \\ \hline
    Positive feedback (noise free) & 70.44   & 83.80 \\ 
    All feedback (noise free)       & 75.13   & 88.58 \\ \hline
    Full supervision                          & 82.12   & 89.12 \\
    \end{tabular}
    \caption{Accuracy of noise robust federated self training with user feedback against various baselines for \emph{20news} and \emph{sst2} datasets; *: all models using feedback (with and without noise robustness) are statistically significant against the self training baseline (without feedback), at $p<0.05$.}
    \label{tab:user_feedback_eval}
\end{table}


\subsection{Model Evaluation}
\label{sec:model-evaluations}

We compare our model performance against three baselines: 

\textbf{\textit{Initial model}} This is the seed model $f_{\text{s}}(x)$ trained on just $D_s$ ($1 \%$ of the training data).

\textbf{\textit{Self-training}} We train this model using federated learning with pseudo labels, but \textit{do not} utilize the user feedback. Hence, at each client, we only have the raw pseudo labels $\rho_{i}$ for each $x_i \in D_{u}^{n}$ to train on. 
We use this setup as the primary baseline against which to compare the performance of our models trained with user feedback. 

\textbf{\textit{Full supervision}} We train a model in a federated setting using $D_{u}^{n}$ and the true gold labels at each client. Although in a real-world setting, the clients will not have access to the gold labels, we establish this benchmark to set an upper bound.

We evaluate performance of our proposed strategy of leveraging user feedback in two settings: 

\textbf{\textit{Positive feedback}} At each client, we train the local version of the model using only the samples in $D_{pos}^{n}$ and corresponding pseudo labels $\rho_{i}$, i.e. only the samples which receive positive feedback. Since this baseline is trained using regular cross entropy, it provides a powerful yet computationally less-intensive alternative to training with both types of feedback, which is especially important in edge devices with low compute power. 

\textbf{\textit{All feedback}} We utilize all the data samples in $D_{pos}^{n}$ and $D_{neg}^{n}$ at each client and train using the loss function described in Section~\ref{subsec:nll}. 

In both the \textit{positive} and \textit{all feedback} setups, we evaluate models with and without user feedback noise. For the noise-free scenario, we set $\gamma=1$ and $\delta=1$ while simulating the user feedback. This leads to perfectly accurate feedback, as discussed in \S\ref{subsec:feedback_sim}.
For the noisy feedback scenario, we use the noise parameters derived from the Mturk study. We obtain the following dataset-specific values of $\gamma$ and $\delta$ by averaging the estimates of $\gamma$ and $\delta$ across all annotators: ($\gamma=0.79$, $\delta=0.55$) for \emph{20news} and ($\gamma=0.76$, $\delta=0.70$) for \emph{sst2}.

Table~\ref{tab:user_feedback_eval} reports the \% accuracy for each of the experimental setups described above across both datasets. We observe that in both the noisy and noise-free settings, the introduction of positive user feedback shows a marked improvement in performance when compared to the self-training baseline. There is an additional performance gain when we add negative feedback (all feedback baseline), which signifies the importance of learning from complementary labels. As expected, the improvement is substantially larger in the noise free setting, suggesting the need for model robustness to mitigate the effect of noise. Note that for $sst2$, performance of the noise free model with all feedback is very close to that of full supervision, thanks to the fact that complementary labels in the case of binary classification provide same information as true labels. On the other hand, using perfect positive and negative feedback in $20news$ is still sub-optimal compared to full supervision, since a negative label in this dataset is less informative compared to $sst2$. 
\subsection{Noise Robustness}
To mitigate the effects of noise, we replace the traditional cross-entropy loss function with the active-passive loss described in \S\ref{subsec:noise-robust}, using the same experimental setups presented earlier (positive only and all-feedback), with $\gamma$ and $\delta$ values from the Mturk study. However, as evident in Table~\ref{tab:user_feedback_eval}, the robust loss functions only seem to confer marginal performance improvements in both datasets. This is likely due to the fact that the noise parameters extracted from Mturk belong to a moderate to low noise regime (Section~\ref{subsec:user_feedback}), providing limited room for gains with noise robustness. 

To further investigate this, we explore two extreme cases of user feedback noise for the \emph{20news} dataset: i) low noise, where we simulate user feedback with $\gamma \rightarrow 1$, $\delta \rightarrow 1$ for all the clients, which imitates clients providing correct feedback with very high probability, and 
ii) adversarial noise, with $\gamma \rightarrow 0$, $\delta \rightarrow 0$ for all the clients, which captures the possible risk of users deliberately providing incorrect feedback with high probability. In Table~\ref{tab:noise_robust_high_noise}, we compare the performances of the all feedback model trained with and without noise robustness in these two scenarios. As seen in the table, when user noise is high, the noise-robust loss functions show a statistically significant improvement against the noisy model, highlighting the value of adding noise robustness. In the low noise regime, adding robustness seems to cause negligible degradation in accuracy, but within the bounds of statistical error. Given this, we recommend using noise robustness in all applications of this framework unless it is well known before hand that the feedback has very low noise. We defer the task of developing a noise robustness regime that works for all noise levels to future work.     

    
\begin{table}[tb]
    \centering
    \begin{tabular}{c|c|c}
    \textbf{Noise level} & \textbf{Loss} & \textbf{Accuracy} \\ \hline
    \multirow{2}{*}{Low} & $loss_{robust}$         &73.29   \\
    & $loss_{reg}$ &74.30   \\ \hline
    
    \multirow{2}{*}{Adversarial} & $loss_{robust}$        &42.26*\\
    & $loss_{reg}$                       &25.19   \\ 
    \end{tabular}
    \caption{Performance analysis of noise robust loss functions trained on all feedback in different noise regimes for the \emph{20news} dataset; *: statistically significant against the adversarial model without robustness at $p<0.05$.}
    \label{tab:noise_robust_high_noise}
    \end{table}
\subsection{Ablation Studies}

\subsubsection{Varying Degrees of Noise}
As discussed in \S\ref{subsec:noise-robust}, the level of feedback noise has a substantial impact on model performance. 
In this section, we further investigate this effect, simulating user feedback across various noise parameters values, spanning $\gamma, \delta \in \{0.3,0.5,0.7\}$, to capture different points in the $\gamma - \delta$ space. Table~\ref{tab:varying_noise} shows our results for each dataset with the noise robust loss function \ref{eq:noise_robust_loss}. As expected, as $\gamma \rightarrow 0$ and/or $\delta \rightarrow 0$, model performance decreases on both datasets. At very low values of $\delta$ and $\gamma$, e.g. both $\leq$ 0.5, training on the extremely noisy user feedback actually decreases model performance below the original seed model. This is not unexpected, since at $\delta = 0.5$ and $\gamma = 0.5$, user feedback is essentially random noise, and at lower values the feedback is adversarial. These results highlight the importance of evaluating the reliability of user feedback before using it to further train an ML system.


\begin{table}
    \centering
    \begin{subtable}[h]{0.45\textwidth}
    \centering
    \begin{tabular}{@{}c|c|c|c@{}}
    $\gamma/\delta$ & 0.7   & 0.5   & 0.3 \\ \hline
    0.7         & 66.69 & 63.18 & 60.66 \\ 
    0.5         & 65.56 & 59.15 & 59.73 \\ 
    0.3         & 60.01 & 58.94 & 58.21 \\ 
    \end{tabular}
    \caption{\emph{20news} dataset; initial model performance: 59.14, performance with all feedback and no noise ($\gamma = \delta = 1$): 75.13.}
    \end{subtable}
    
    \begin{subtable}[h]{0.45\textwidth}
    \centering
    \begin{tabular}{@{}c|c|c|cc@{}}
    $\gamma/\delta$ & 0.7   & 0.5   & 0.3 \\ \hline
    0.7         & 83.86 & 80.89 & 76.17 & \\ 
    0.5         & 81.99 & 77.38 & 75.07 \\ 
    0.3         & 78.03 & 74.41 & 71.99 \\ 
    \end{tabular}
    \caption{\emph{sst2} dataset; initial model performance: 77.37, performance with all feedback and no noise ($\gamma = \delta = 1$): 88.58.}
    \end{subtable}
    \caption{Model performance (accuracy) at varying values of $\gamma$ and $\delta$}
    \label{tab:varying_noise}
    \end{table}

\subsubsection{Non-identical User Feedback Behavior}
\label{subsec:client-dependent}
In previous sections, we used identical values of the noise parameters $\gamma$ and $\delta$ for all clients in the FL training setup. However, as observed in our Mturk study, real users have different feedback behaviors, with scores spread out over the $\gamma - \delta$ space. In this section, we simulate non-identical user feedback for four potential user behaviors: 
\begin{enumerate}
    \item Low noise users ($\gamma \rightarrow 1, \delta \rightarrow 1$)
    \item Adversarial/high noise users ($\gamma \rightarrow 0, \delta \rightarrow 0$)
    \item Positive users ($\gamma \rightarrow 1, \delta \rightarrow 0$) who provide consistently positive feedback, regardless of the correctness of the model prediction; and 
    \item Negative users ($\gamma \rightarrow 0, \delta \rightarrow 1$) who provide consistently negative feedback. 
\end{enumerate}

To simulate non-identical user feedback for the clients, we sample the noise parameters from an appropriately skewed $\beta(a,b)$ distribution. 
For example, in order to generate $\delta$ and $\gamma$ scores for setup four (negative users), each user needs $\gamma \approx 0$, $\delta \approx 1$. To generate these parameters, we sample $\gamma$ from $\beta(1, 10)$ and $\delta$ from $\beta(10,1)$. Proceeding this way, we can simulate all four user behaviors listed above. Finally, we also conduct an experiment closer to the real-world scenario, where we randomly sample 15 workers each for both datasets from our Mturk study and use their estimated values of $\gamma$ and $\delta$ to simulate user feedback. 

Table~\ref{tab:noise dependent on user} shows our results across all five simulations for both datasets when trained with the noise robust loss function \ref{eq:noise_robust_loss}. As expected, the best model performance is achieved with the low-noise users, followed by the real-world users sampled from our MTurk study. In the three other simulations (adversarial, consistently positive, consistently negative), user feedback is highly noisy and unreliable, and the models show limited improvement over the initial seed model. Note that the performance in the positive feedback scenario is higher than negative feedback, which can be accredited to the fact that the initial seed model's accuracy is greater 50\% for both datasets (Table~\ref{tab:user_feedback_eval}). With $>50\%$ accuracy, a majority of the pseudo-labels generated using the seed model will match the gold label. Hence, consistently positive feedback introduces less noise and in turn better performance compared to the all negative feedback model.



\begin{table}
    \centering
    \begin{tabular}{@{}c|c|c@{}}
    \textbf{User Behavior}            & \textbf{20news}                     & \textbf{sst2}  \\ \hline
    Low noise         & \multicolumn{1}{c|}{73.67} & 88.35 \\
    Adversarial          & \multicolumn{1}{c|}{55.86} & 64.85 \\
    Always positive & \multicolumn{1}{c|}{60.99} & 77.16 \\
    Always negative & \multicolumn{1}{c|}{58.92} & 74.13 \\
    Real world (mturk study)        & \multicolumn{1}{c|}{65.37} & 85.61 \\ 
    \end{tabular}
    \caption{Model performance at various user behaviors.}
    \label{tab:noise dependent on user}
    \end{table}

\section{Conclusion}
In this work, we propose a novel framework for federated learning which leverages noisy user feedback on the edge. Our framework eliminates the need for labeling edge data, in turn improving customer privacy since we no longer need to move data off of edge devices for annotation. In order to evaluate our framework, we propose a method to simulate user feedback on publicly-available datasets and a parametric model to simulate user noise in that feedback. Using that method, we conduct experiments on two benchmark text classification datasets and show that models trained with our framework significantly improve over self-training baselines. 

Future work includes deploying our framework in a real world FL application and incorporating live user feedback in model training. We can also explore other noise-robustness strategies for low and medium label-noise scenarios. One such strategy would be to incorporate a measure of user reliability into the calculation of each new global model in FedAVG -- e.g. the updated global model parameters could be computed as a weighted average of client models, with the weight capturing some measure of each client's reliability. 

\section{Ethics Statement}
Our Mturk data collection recruited annotators from across the globe without any constraints on user demographics. The annotators were compensated with above minimum wages and no personal information was collected from the annotators. 

\bibliography{ref}

\appendix 

\section{Pseudocode}
Algorithm ~\ref{alg:userfeedback} lists our training loop. 

\begin{algorithm*}[tbh]
    \caption{Algorithm for simulating user feedback}
    \label{alg:userfeedback}
    \begin{flushleft}
        \textbf{INPUT:} Client data: $D_{u}^{n} = \{x_i, y_i\}$; Pseudo labels: $\rho_{i}$ \\
        \textbf{OUTPUT:}  $D_{pos}^{n}$ and $D_{neg}^{n}$
    \end{flushleft}
    \begin{algorithmic}[1]
        \State $D_{pos}^{n} \gets \{\}$, $D_{neg}^{n} \gets \{\}$
        \For{sample i in $D_{u}^{n}$}
            \If{$y_i == \rho_i$} \Comment{correct model prediction}
            \State $D_{pos}^{n} \gets \{D_{pos}^{n} \cup i\}$ with probability $\gamma$
            
            \textit{or} 
            \State $D_{neg}^{n} \gets \{D_{neg}^{n} \cup i\}$ with probability $1 - \gamma$ \Comment{noise}
            \ElsIf{$y_i != \rho_i$} \Comment{incorrect model prediction}
            \State $D_{neg}^{n} \gets \{D_{neg}^{n} \cup i\}$ with probability $\delta$
            
            \textit{or}
            \State $D_{pos}^{n} \gets \{D_{pos}^{n} \cup i\}$ with probability $1 - \delta$ \Comment{noise}
            \EndIf
        \EndFor
        \State \textbf{return} $D_{pos}^{n}$ and $D_{neg}^{n}$
  \end{algorithmic}
\end{algorithm*}

\section{Estimators for \texorpdfstring{$\gamma$}{g} and \texorpdfstring{$\delta$}{d}}
\label{appendix:estimators}
Let $X$ be the data set and $n$ be the total number of data points. For any data point $i \in [n]$, let $p_i$, $t_i$ and $f_i$ denote the model predicted label, ground truth label, and user feedback respectively. (Note that $p_i$ and $t_i$ take values from the set of labels and $f_i$ takes values from the set $\{ pos, neg, idk  \}$ representing feedbacks positive, negative, and 'I don't know'.) By definition, we have
\begin{align*}
	\gamma &:= \Pr(f_i = pos \mid p_i=t_i)  \\
	\delta &:= \Pr(f_i = neg \mid p_i \neq t_i)
\end{align*}

Let us also define 
\begin{align*}
	\alpha &:=  \Pr(f_i = neg \mid p_i=t_i)  \\
	\beta &:= \Pr(f_i = pos \mid p_i \neq t_i)
\end{align*}

Note that the above definitions imply that
\begin{align*}
	1-\alpha-\gamma &=  \Pr(f_i = idk \mid p_i=t_i)  \\
	1-\beta-\delta &= \Pr(f_i = idk \mid p_i \neq t_i)
\end{align*}

Moreover, let $a$ denote the accuracy of the labels predicted by the model defined as 
\begin{align*}
	 a &:= \Pr(p_i=t_i)
\end{align*}

Define sets $\{S_j\}_{j \in [6]}$ such that
\begin{align*}
	S_1 &:=  \{ i \in [n]:  f_i = pos \text{ and } p_i=t_i \} \\
	S_2 &:=  \{ i \in [n]:  f_i = pos \text{ and } p_i \neq t_i \} \\  
	S_3 &:=  \{ i \in [n]:  f_i = neg \text{ and } p_i = t_i \} \\  
	S_4 &:=  \{ i \in [n]:  f_i = neg \text{ and } p_i \neq t_i \} \\  
	S_5 &:=  \{ i \in [n]:  f_i = idk \text{ and } p_i = t_i \} \\  
	S_6 &:=  \{ i \in [n]:  f_i = idk \text{ and } p_i \neq t_i \}
\end{align*}
define $n_j := \lvert S_j \rvert$. Note that $\sum\limits_{j \in [6]} n_j = n$.

\begin{thm}
	The maximum likelihood estimators for $\gamma$ and $\delta$ are $n_1/(n_1 + n_3 + n_5)$ and $n_4/(n_2 + n_4 + n_6)$ respectively. 
\end{thm}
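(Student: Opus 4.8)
The plan is to recognize the statement as a standard multinomial maximum-likelihood computation whose key feature is that the log-likelihood separates into parameter-disjoint groups. First I would assume the $n$ data points are drawn i.i.d.\ from the joint law of the pair $(f_i, \mathbf{1}[p_i = t_i])$, which has exactly the six outcomes indexing $S_1,\dots,S_6$. Factoring each joint probability into a conditional-feedback term times the (in)accuracy marginal $a = \Pr(p_i=t_i)$ gives the six cell probabilities
\begin{gather*}
\pi_1 = \gamma a,\quad \pi_2 = \beta(1-a),\quad \pi_3 = \alpha a,\\
\pi_4 = \delta(1-a),\quad \pi_5 = (1-\alpha-\gamma)a,\quad \pi_6 = (1-\beta-\delta)(1-a),
\end{gather*}
where the forms of $\pi_5$ and $\pi_6$ use the identities for the ``idk'' conditionals already recorded in the excerpt.

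Next I would write the likelihood as the multinomial product $\prod_{j=1}^{6}\pi_j^{n_j}$ and pass to the log-likelihood $\ell = \sum_j n_j \log \pi_j$. Substituting the factorizations and collecting terms, $\ell$ splits into three additive pieces that share no parameters: a piece depending only on $a$, namely $(n_1+n_3+n_5)\log a + (n_2+n_4+n_6)\log(1-a)$; a piece in $(\gamma,\alpha)$ equal to $n_1\log\gamma + n_3\log\alpha + n_5\log(1-\alpha-\gamma)$; and a symmetric piece in $(\beta,\delta)$ equal to $n_2\log\beta + n_4\log\delta + n_6\log(1-\beta-\delta)$. Because the groups are disjoint, the joint maximizer is obtained by maximizing each piece separately, so in particular the estimate for $\gamma$ depends on neither $a$ nor the negative-feedback parameters.

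Then I would solve the $(\gamma,\alpha)$ piece, which is exactly the MLE for a three-cell multinomial on the simplex with cell probabilities $\gamma$, $\alpha$, $1-\gamma-\alpha$ and counts $n_1$, $n_3$, $n_5$. Setting the two partial derivatives to zero gives the stationarity conditions $n_1/\gamma = n_3/\alpha = n_5/(1-\alpha-\gamma)$; solving these yields $\hat\gamma = n_1/(n_1+n_3+n_5)$, and the symmetric computation on the $(\beta,\delta)$ piece yields $\hat\delta = n_4/(n_2+n_4+n_6)$, which is the claim. Finally I would confirm these are the global maxima by observing that each piece is a sum of logs of (affine) functions of the parameters and is therefore concave on the open simplex, so the unique stationary point is the maximizer.

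I expect the main obstacle to be conceptual rather than computational: correctly identifying the simplex constraint inside each group by keeping track of the ``idk'' cell — the residual probability $1-\alpha-\gamma$ — so that the multinomial is genuinely three-dimensional and the normalizing denominators $n_1+n_3+n_5$ and $n_2+n_4+n_6$ emerge rather than, say, $n$ or $n_1+n_3$. A minor technical point worth a remark is the boundary behaviour when some $n_j = 0$ (for instance no ``idk'' responses), where the maximizer lies on the edge of the simplex; the stated formula still holds there by a continuity or direct-limit argument.
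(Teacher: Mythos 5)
Your proposal is correct and follows the same core computation as the paper: both write the six multinomial cell probabilities $\gamma a$, $\beta(1-a)$, $\alpha a$, $\delta(1-a)$, $(1-\alpha-\gamma)a$, $(1-\beta-\delta)(1-a)$, form the multinomial log-likelihood, and solve the stationarity conditions to obtain $\hat\gamma = n_1/(n_1+n_3+n_5)$ and $\hat\delta = n_4/(n_2+n_4+n_6)$. Where you genuinely diverge is in how global optimality is certified. The paper invokes Fermat's theorem, argues the log-likelihood is negatively unbounded on the boundary of $[0,1]^4$, and then asserts that the value at the critical point ``is positive which suggests that it is the optimizer'' --- a claim that is both non-rigorous and literally false, since the log of a probability mass function is at most $0$. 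Your route --- splitting $\ell$ into three parameter-disjoint pieces and observing that each piece is a sum of logarithms of affine functions, hence concave on the open simplex with a unique interior stationary point --- closes that gap cleanly and also makes transparent why $\hat\gamma$ does not depend on $a$ or on $(\beta,\delta)$. Your remark about the degenerate case $n_j=0$ (where the maximizer sits on the simplex boundary and the formula survives by a limiting argument) addresses an edge case the paper silently ignores. In short: same decomposition and same stationary-point algebra, but your finishing argument is the one that actually proves the theorem.
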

\begin{proof}
Now for any data point $i \in [n]$, we have
\begin{align*}
	\Pr(i \in S_1) &= \Pr(f_i = pos \text{ and } p_i=t_i) \\
	&=   \Pr(f_i = pos \mid p_i=t_i) \cdot \Pr(p_i=t_i) \\
	&= \gamma  a.
\end{align*}

By a similar reasoning, we have
\begin{align*}
	\Pr(i \in S_2) &= \beta (1-a) \\
	\Pr(i \in S_3) &= \alpha a \\
	\Pr(i \in S_4) &= \delta (1-a) \\
	\Pr(i \in S_5) &= (1-\alpha-\gamma) a \\
	\Pr(i \in S_6) &= (1-\beta-\delta) (1-a)
\end{align*}

Therefore the likelihood function of the model is
\begin{equation*}
    \begin{split}
	\mathcal{L}(\alpha, \beta, \gamma, \delta \mid X) = \dfrac{n!}{n_1! \dots n_6!} (\gamma a)^{n_1} \\ (\beta (1-a))^{n_2} (\alpha a)^{n_3} \\ (\delta (1-a))^{n_4} ((1-\alpha-\gamma)a)^{n_5} \\ ((1-\beta-\delta)(1-a))^{n_6}
	\end{split}
\end{equation*}
and consequently the log-likelihood function is
\begin{gather}
    \begin{split}
	\log \mathcal{L}(\alpha, \beta, \gamma, \delta \mid X) = \log\left(\dfrac{n!}{n_1! \dots n_6!}\right) &+ \\ n_1 \log(\gamma a) +  n_2\log(\beta (1-a)) &+ \\ n_3\log(\alpha a) 
	+ n_4 \log(\delta (1-a)) &+ \\ n_5\log((1-\alpha-\gamma)a) &+ \\ n_6\log((1-\beta-\delta)(1-a))
	\end{split}
\end{gather}

To obtain MLE estimates of parameters $\alpha, \beta, \gamma, \delta$, we wish to solve the following optimization problem
\begin{equation}\label{mle-opt}
	\max_{(\alpha, \beta, \gamma, \delta) \in [0, 1]^4} \log \mathcal{L}(\alpha, \beta, \gamma, \delta \mid X)
\end{equation}

By Fermat's theorem, the optimal solution to the above optimization problem lies at either a boundary point or a stationary point.

The boundary points of the set $[0, 1]^4$ are given by the set
\begin{equation*}
\begin{split}
    B := \{  (\alpha, \beta, \gamma, \delta) \in [0, 1]^4:  \alpha=0 \text{ or } \alpha=1 \text{ or } \\ \beta=0 \text{ or } \beta=1 \text{ or } \\ \gamma =0 \text{ or } \gamma =1 \text{ or } \\ \delta=0 \text{ or } \delta=1  \}
\end{split}
\end{equation*}
The value of the function $\log \mathcal{L}(\alpha, \beta, \gamma, \delta)$ is negatively unbounded on the set $B$. 

On the other hand, the stationary points can be determined by setting the gradient to be zero, i.e., by solving the equation
\begin{equation*}
\nabla_{(\alpha, \beta, \gamma, \delta)} \log \mathcal{L}(\alpha, \beta, \gamma, \delta \mid X) = 0.
\end{equation*}

Solving the above equation yields the stationary point $(\alpha^*, \beta^*, \gamma^*, \delta^*)$ given as
\begin{align*}
	\alpha^* &= n_3/(n_1 + n_3 + n_5) \\
	\beta^* &= n_2/(n_2 + n_4 + n_6) \\
	\gamma^* &= n_1/(n_1 + n_3 + n_5) \\
	\delta^* &= n_4/(n_2 + n_4 + n_6)
\end{align*}
The value of the log-likelihood function at the above critical point, i.e., $\log \mathcal{L}(\alpha^*, \beta^*, \gamma^*, \delta^* \mid X)$ is positive which suggests that it is the optimizer of the optimization problem in (\ref{mle-opt}).
\end{proof}

\section{Details on MTurk study}
\label{appendix:mturk_study}
Figure~\ref{fig:mturk_instructions} shows the instruction page used to guide the workers on Mturk. Since our goal here was to simulate real user feedback for AI systems, we designed the prompt to mimic a situation where users provide their judgements on the accuracy of machine predictions on a given task. Each assignment page had 40 questions for the 20news task (50 for sst2), with an example question shown in Figure~\ref{fig:mturk_example}. For a real world application of this setting, we can imagine an email categorization model deployed on end-user email clients which automatically classifies incoming emails to a predefined class. The user would approve (select Accurate to above question) if the categorization was correct, reject or make correction (select Inaccurate to the question) or take no action. This closely follows the federated user feedback scenario described in our experiments with users explicitly providing positive or negative feedback. 

We recruited highly reliable annotators on Mturk by selecting past approval rate as $95\%+$ and number of past approved tasks as $5000+$. The average time for each task was 30 minutes, and the annotators were paid $\$7$ for completing the task which is above the US federal minimum hourly wage, given the average time for task completion. Note that we did not place any geographic restrictions on the annotators, nor reject any partial submissions, despite stating as such in the instruction sheet, as they were few in number.  

\begin{figure*}
    \centering
    \includegraphics[trim={4cm 5cm 2cm 1cm},width=1.15\linewidth]{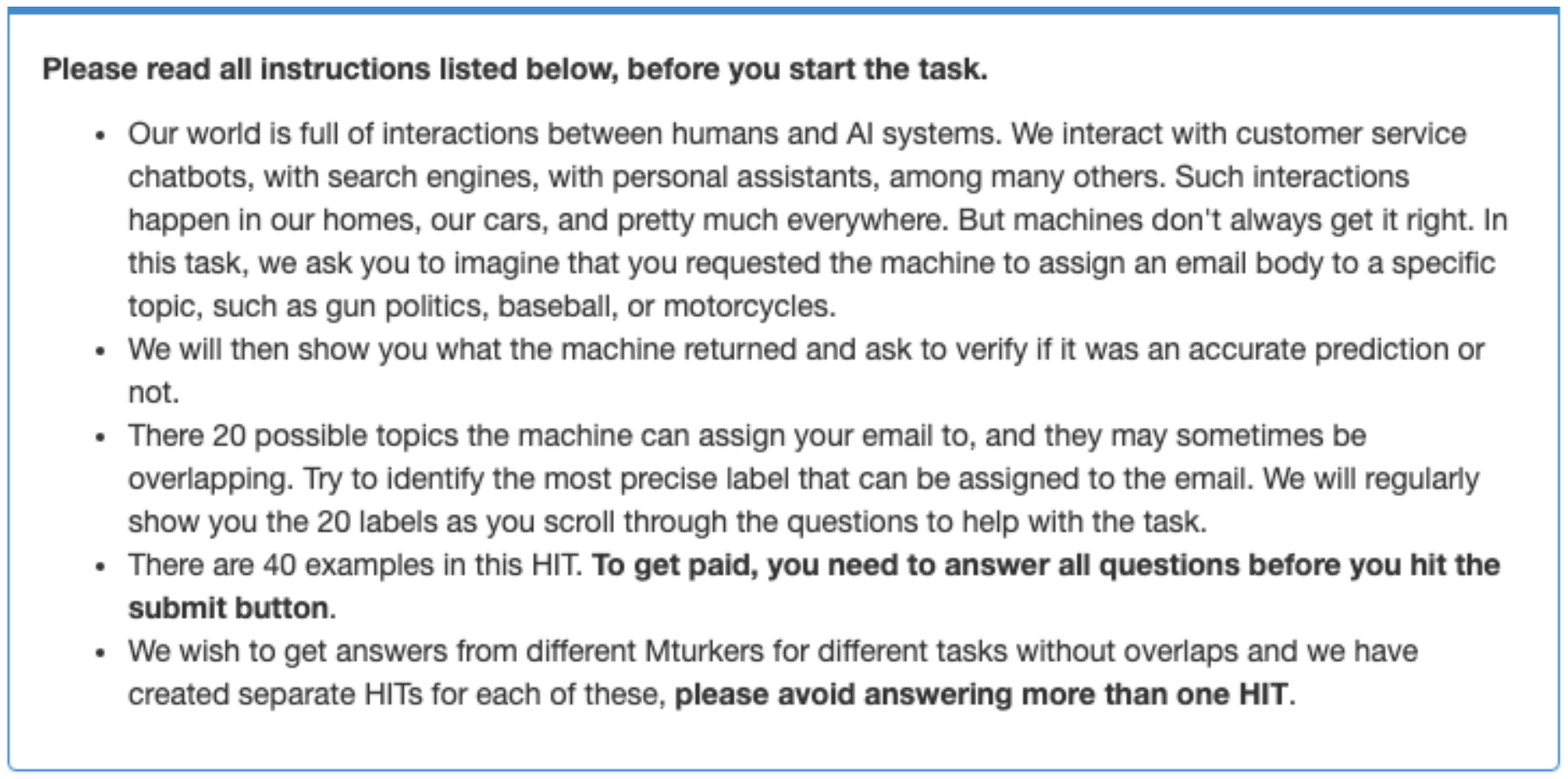}
    \caption{Instruction page with guidelines for Mturk annotators.}
    \label{fig:mturk_instructions}
\end{figure*}

\begin{figure*}
    \centering
    \includegraphics[trim={2cm 2cm 2.5cm 2cm},width=0.75\linewidth]{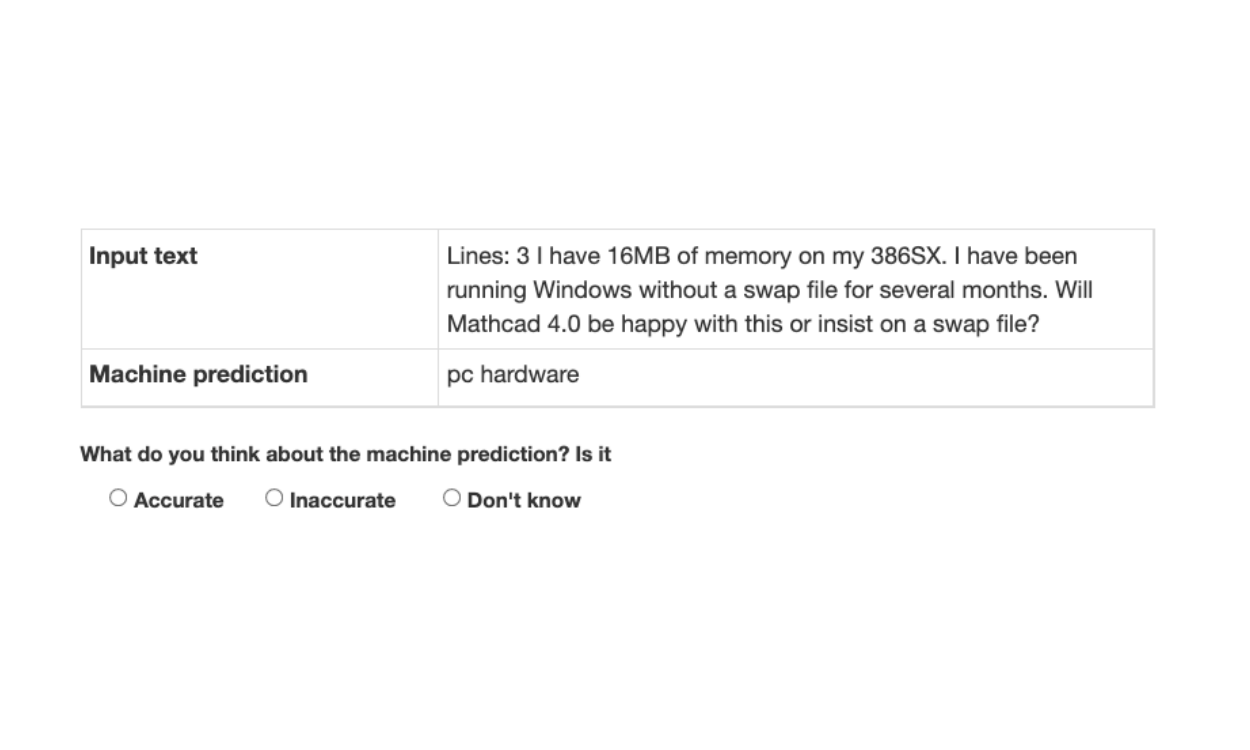}
    \caption{Example annotation task}
    \label{fig:mturk_example}
\end{figure*}

In Figure~\ref{fig:mturk_data}, we show the error in user feedback computed against gold labels for all the users. We also show the distribution of positive and negative responses for all the users. As evident from the figure, users provide positive feedback in majority cases. This behavior is expected since the initial model's accuracy for \emph{20news} is $59.14\%$ and for \emph{sst2} is $77.37\%$; since a the majority of the pseudo labels are correct predictions, we expect mostly positive feedback from the users. 

\begin{figure*}
    \centering
    \includegraphics[width=\linewidth]{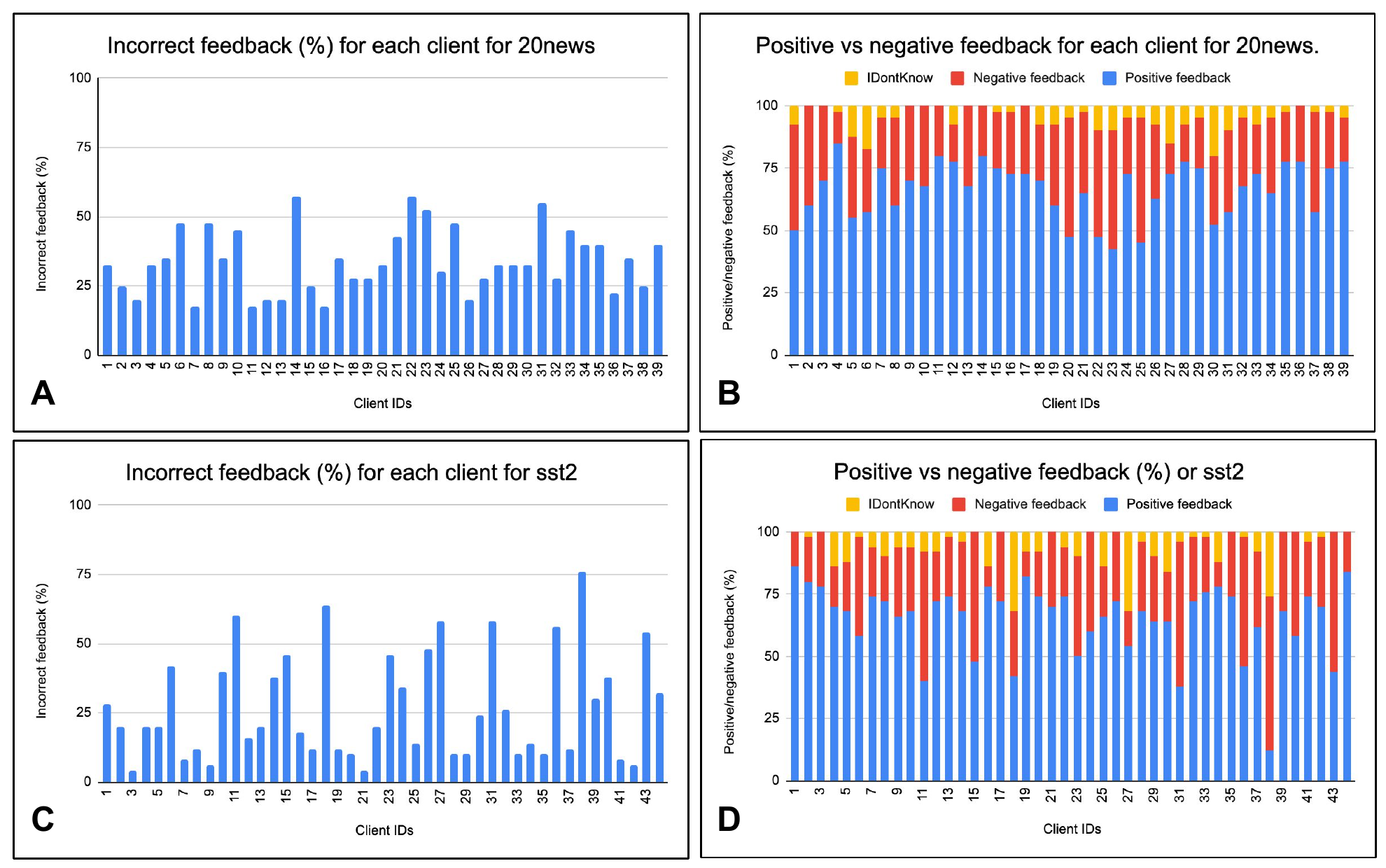}
    \caption{User feedback behavior of clients in Mturk case study. A \& C: Incorrect feedback(\%) for all the clients for 20news and sst2. B \& D: Distribution of negative and positive feedback for each client in 20news and sst2}
    \label{fig:mturk_data}
\end{figure*}
\end{document}